\documentclass{article}
\usepackage[preprint]{neurips_2024} 

\usepackage[utf8]{inputenc}
\usepackage[T1]{fontenc}
\usepackage{amsmath,amssymb,amsfonts,amsthm} 

\usepackage{tikz}
\usepackage{mathtools}
\usepackage{graphicx}
\usepackage{xcolor}
\usepackage{dsfont}        
\usepackage[nolist]{acronym}
\usepackage{paralist}      
\usepackage{pgfplots}      
\pgfplotsset{compat=1.17}
\usepackage{algorithm2e}   


\newcommand{\setR}{\mathcal{R}}
\newcommand{\setX}{\mathcal{X}}

\newcommand{\setH}{\mathcal{H}}
\newcommand{\setD}{\mathcal{D}}
\newcommand{\dom}{\mathrm{dom}}
\newcommand{\bys}{\mathrm{bys}}
\newcommand{\hng}{\mathrm{hng}}

\newcommand{\dif}{\mathrm{d}}


\newcommand{\Ex}[2]{\mathbb{E}_{#2} \left\lbrace #1 \right\rbrace }
\newcommand{\ind}[1]{\boldsymbol{1} \left[ #1 \right] }

\newcommand{\set}[1]{\left\lbrace #1 \right\rbrace }

\newtheorem{theorem}{Theorem}
\newtheorem{lemma}{Lemma}
\newtheorem{definition}{Definition}
\newtheorem{example}{Example}

\title{Universal Training of Neural Networks to Achieve Bayes Optimal Classification Accuracy}

\author{%
  Mohammadreza Tavasoli Naeini\\
  University of Michigan\\
  \texttt{tavasoli@umich.edu} \\
  \And
  Ali Bereyhi\\
  University of Toronto\\
  \texttt{ali.bereyhi@utoronto.ca} \\
  \And
  Morteza Noshad\\
  Stanford University\\
  \texttt{noshad@stanford.edu} \\
  \And
  Ben Liang\\
  University of Toronto\\
  \texttt{liang@ece.utoronto.ca} \\
  \And
  Alfred O. Hero III\\
  University of Michigan\\
  \texttt{hero@umich.edu} \\
}

\date{}  

\begin{document}
\begin{acronym}
\acro{map}[MAP]{maximum-a-posteriori-probability}
\end{acronym}

\maketitle

\renewcommand{\thefootnote}{\relax} 


\begin{abstract}
This work invokes the notion of $f$-divergence to introduce a novel upper bound on the Bayes error rate of a general classification task. We show that the proposed bound can be computed by sampling from the output of a parameterized model. Using this practical interpretation, we introduce the \textit{Bayes optimal learning threshold (BOLT) loss} whose minimization enforces a classification model to achieve the Bayes error rate. We validate the proposed loss for image and text classification tasks, considering MNIST, Fashion-MNIST, CIFAR-10, and IMDb datasets. Numerical experiments demonstrate that models trained with BOLT achieve performance on par with or exceeding that of cross-entropy, particularly on challenging datasets. This highlights the potential of BOLT in improving generalization.
\end{abstract}

\section{Introduction}

In classification, the goal is to assign a given sample $x \in \mathbb{R}^d$ to one of the $m$ possible classes $C_1, \ldots, C_m$. At its core, this task describes an estimation problem whose optimal classifier in Bayesian sense is the \ac{map} estimator, i.e., \begin{equation}
\label{MAP}
\hat{C}(x) \;=\; \arg\max_{\,C_i \,\in\, \{C_1,\dots,C_m\}} P\bigl(C_i \mid x\bigr).
\end{equation}
where the posterior $P(C_i|x)$ is computed by Bayes' rule 
in terms of the \textit{class likelihood} $P_{C_i}(x) = P(x|C_i)$ and \textit{class prior probability} $P(C_i)$. 
The \textit{Bayes error rate} $\varepsilon_{\bys}$ is defined to be the expected misclassification rate of this classifier, i.e.,
\begin{equation}
\varepsilon_{\bys} = \Ex{ \ind{ \hat{C}(x) \neq C(x)} }{x},
\end{equation}
with $C(x)$ denoting the \textit{true class} of $x$, and $\ind{\cdot}$ being the indicator function. In the Bayesian sense, $\varepsilon_{\bys}$ is the minimum achievable error rate on the given dataset \cite{fukunaga2013introduction}. It thus serves as a lower bound on the classification error of a learning model and is utilized for setting a performance benchmark \cite{bishop2006pattern}. 
By substituting the MAP decision rule into the definition, the Bayes error is expressed as \cite{bishop2006pattern,hastie2009elements}
\begin{equation}
    \label{BayesError}
    \varepsilon_{\bys} = \Ex{1 - \max_i P(C_i\vert x)}{x}
\end{equation}
with $\Ex{\cdot}{x}$ denoting expectation over the data distribution. Though well-defined, the expression in \eqref{BayesError} is often infeasible to compute, as it requires full knowledge of data distribution. 
Several methods have been developed to approximate or bound the Bayes error, providing practical alternatives to gauge model performance when working with high-dimensional data and finite samples \cite{fukunaga2013introduction,  bishop2006pattern, hastie2009elements, duda2001pattern,noshad2019learning,theisen2021evaluating}. This work develops a new universal bound on the Bayes error rate that can be computed by sampling from the output of an arbitrary learning model. This bound is then used to train neural networks to achieve Bayes optimal accuracy.
\paragraph*{Why Bayes Error} While common classification losses, such as cross-entropy, provide practical means for training a learning model, they do not necessarily give the most accurate reflection of a model's generalization capabilities. The cross-entropy, for instance, minimizes the negative log-likelihood of the true label, which works well for optimizing classification accuracy on a training set. However, it does not directly capture the true limiting error that a classifier can achieve. 
In this sense, distance to the Bayes error provides a better description on how well the trained model generalizes. It is however not clear how such a distance can be characterized in practice, as the Bayes error cannot be computed directly from data samples. Motivated by this question, this work develops an upper bound that can be computed empirically from samples collected from a learning model. Observations on tightness of this bound then lead to a practical answer to the mentioned question: by minimizing this bound, we can get close to the Bayes error and hence implicitly minimize the distance to optimal classification.
\subsection{Objective and Contributions}
This work proposes a novel upper bound on the Bayes error. 
The proposed bound is \textit{universal} in the sense that it can be computed by sampling from the output of an \textit{arbitrary} learning model. We establish this bound by invoking the notion of $f$-divergence to measure the dissimilarity between different classes in the dataset. 
We use this bound to define a new loss function that can be used to train parameterized models, such as deep neural networks, over a given dataset. 
In a nutshell, the main contributions of this work are as follows:
\begin{inparaenum}
    \item[($i$)] we show that for a given classification task, the Bayes error rate is related to the $f$-divergence between the conditional data distributions on different labels.
    \item[($ii$)] Using this finding, we derive a new upper bound on the Bayes error that can be computed directly from data samples via sampling.
    \item[($iii$)] We give an alternative interpretation for this upper bound in terms of a generic parameterized classification model.
    \item[($iv$)] We propose a new loss function, called \textit{Bayesian optimal learning threshold (BOLT)}, that closely aligns the model training with achieving the Bayes error rate. 
    \item[($v$)] We validate the effectiveness of the proposed loss function through numerical experiments. Our investigations demonstrate that using BOLT, model generalization across MNIST, Fashion-MNIST, CIFAR-10, and IMDb can be enhanced as compared with the conventional cross-entropy loss.
\end{inparaenum}
\subsection{Related Work}
\paragraph*{Computing Bayes Error}
Traditional methods for approximating the Bayes error involve direct computation from learned distributions, which work well in low-dimensional settings \cite{devroye1996probabilistic}. Recent advancements have leveraged machine learning techniques to approximate the Bayes error in more complex scenarios \cite{noshad2019learning,jiang2020fantastic}.
The authors in \cite{noshad2019learning} introduce a method to benchmark the best achievable misclassification error, which is particularly useful in training models. The proposed method lays a groundwork for modern approaches that seek to estimate the Bayes error in high-dimensional spaces, where direct computation of the Bayes error is infeasible. The work in \cite{jiang2020fantastic} investigated generalization measures as a means to approximate the Bayes error on a given dataset. This alternative viewpoint provides insights on how generalization measures can help train more robust classifiers. Connections between generalization and indirect approximation of the Bayes error in the context of deep learning are studied in \cite{zhang2021understanding}. 
Recent studies consider \textit{direct} approaches to \textit{empirically} estimate the Bayes error from data samples. This is motivated by the fact that direct estimators can be utilized to train parameterized models. A model- and hyperparameter-free estimator is introduced in \cite{ishida2023performance} for binary classification. The proposed scheme enables evaluation of state-of-the-art classifiers and detection of overfitting in test datasets. The study in \cite{zhang2024certified} elaborates how the Bayes error imposes fundamental constraints on the robustness of a trained model, highlighting the trade-offs between robustness and accuracy achieved by classifiers.

\paragraph*{Dissimilarity Measure}
Most computable bounds and approximations of the Bayes error rate require a measure to quantify the dissimilarity among different classes in a dataset; see for example \cite{noshad2019learning}. Among various measures, divergence metrics, e.g., Kullback-Leibler (KL) divergence and Wasserstein distance, are classical choices, which also play a central role in deep learning. The $f$-divergence is a generic divergence metric that measures the difference between two probability distributions.
Due to its universality, the $f$-divergence is widely used~in deep learning. For instance, it is employed in \cite{nowozin2016f} to enhance the training stability of generative adversarial networks (GANs). A comprehensive study of the $f$-divergence and its applications to machine learning is provided in \cite{agrawal2021optimal}.

\paragraph*{Dissimilarity and Loss}
Dissimilarity measures have been largely used to define loss functions in deep learning. This follows from the well-established fact that these measures, if chosen properly, can effectively guide models towards desired behaviors.
The most classical example is  cross-entropy used for training classification models. Maximum mean discrepancy (MMD) test is another notable example, which is widely adopted in domain adaptation and GAN training \cite{gretton2012kernel}. 
Recent advancements in deep learning, such as Wasserstein GANs, have leveraged the notion of Wasserstein distance to improve training stability in generative models \cite{arjovsky2017wasserstein}. Another instance is contrastive loss employed in contrastive multiview coding (CMC) to learn representations from multiple views of the same data \cite{tian2020contrastive}. The introduction of contrastive loss 
has influenced several subsequent designs for self-supervised and unsupervised learning, highlighting the importance of capturing multiple perspectives to improve the robustness and generalization of learned models \cite{chen2020simple}. 

\vspace{2mm}
\section{Preliminaries and Background}

With known data statistics, i.e., known prior $P(x)$ and likelihoods $P_{C_i}(x)$, the Bayes error is computed from \eqref{BayesError} as 
\begin{align}
    \varepsilon_{\bys} 
   = 1 - \int_{\setX} \max_{i} \left[  P_{C_i}(x) P(C_i) \right] \dif x,
\end{align}
where $\setX\subseteq \setR^d$ represents the data space.
Following \cite{noshad2019learning}, we can rewrite the Bayes error as the sum of divergences between the conditional distributions $P_{C_i}(x)$ as
\begin{equation}
\label{Decompose}
\varepsilon_{\bys} = 1 - P(C_1) - \sum_{\lambda = 2}^{m} \int  \left[ \mu_\lambda (x)  - \mu_{\lambda-1} (x) \right] \dif x,
\end{equation}
where $\mu_\lambda (x)$ is defined as 
$
\mu_\lambda (x) =  \max_{1 \leq i \leq \lambda } \left[  P_{C_i}(x) P(C_i) \right]
$
determining \ac{map} estimation restricted over the first $\lambda$ classes. The expression in \eqref{Decompose} allows for a more granular understanding of the contribution each class makes to the Bayes error. 

\paragraph*{$f$-Divergence}
The $f$-Divergence is a metric used to quantify the difference between two distributions \cite{csiszar1967information, ali1966general}. It can be seen as a generalization of classical metrics, e.g., the KL divergence. 
\begin{definition}[$f$-Divergence] 
For convex function $f$, the $f$-divergence between the distributions $P$ and $Q$, defined on $\setX$, is given by
\begin{align}
     D_f(P \| Q) = \int_{\setX}  f \left( \frac{P(x)}{Q(x)} \right) Q(x) \dif x
\end{align}
where we use $Q(x)$ and $P(x)$ to denote the probability density functions of $Q$ and $P$, respectively.
\end{definition}

For specific choices of function $f$, the $f$-divergence reduces to classical divergence metrics.
\begin{example}[KL divergence]
    Let $f(u) = u \log u$. Then, the $f$-divergence reduces to the KL divergence.
\end{example}
The $f$-divergence can be lower-bounded by a difference expression. The key property of this expression is that it can be computed by sampling from the distributions $P$ and $Q$, and hence useful in practice. This lower-bound is given below.
\begin{lemma}[Bounding $f$-Divergence \textnormal{\cite{nowozin2016f, liese2006statistical,liese2006divergences}}]\label{lem:Bound}
    Let $P$ and $Q$ be defined~on $\setX$. The $f$-divergence is bounded as 
\begin{equation}
   D_f(P \| Q) \geq \sup_{\substack{h \in \setH }} 
    \left[ 
    \Ex{h(x)}{x \sim P} - \Ex{f^*(h(x))}{x \sim Q}
    \right],
    \end{equation}
with $f^*$ denoting the Fenchel conjugate of $f$ defined as
\begin{align*}
    f^*(t) = \sup_{u \in \dom f} \left[t u - f(u)\right],
\end{align*}
and $\setH$ being a suitable class of real-valued functions. The class $\setH$ typically includes bounded measurable functions.
\end{lemma}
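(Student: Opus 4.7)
The plan is to prove the bound by a pointwise application of Fenchel--Young duality followed by integration against $Q$. The starting observation is that, by the very definition $f^*(t) = \sup_{u \in \dom f} [t u - f(u)]$, one immediately has $f^*(t) \geq t u - f(u)$ for every admissible $u$ and $t$, which rearranges to the Fenchel--Young inequality $f(u) \geq t u - f^*(t)$. The strategy is then to specialize $u$ and $t$ to functions of $x$, integrate, and take the supremum.

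First I would fix an arbitrary $h \in \setH$ and apply the Fenchel--Young inequality pointwise with $u = P(x)/Q(x)$ and $t = h(x)$, obtaining
\begin{equation*}
f\left(\frac{P(x)}{Q(x)}\right) \;\geq\; h(x)\,\frac{P(x)}{Q(x)} - f^*\bigl(h(x)\bigr).
\end{equation*}
I would then multiply both sides by $Q(x) \geq 0$ and integrate over $\setX$. The left-hand side is by definition $D_f(P\|Q)$, while the right-hand side splits cleanly as
\begin{equation*}
\int_{\setX} h(x) P(x) \dif x - \int_{\setX} f^*\bigl(h(x)\bigr) Q(x) \dif x \;=\; \Ex{h(x)}{x \sim P} - \Ex{f^*(h(x))}{x \sim Q}.
\end{equation*}
Since the resulting inequality holds for every $h \in \setH$, taking the supremum over $\setH$ yields the claimed lower bound.

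The main technical obstacle is regularity: one must ensure that the pointwise Fenchel inequality can be integrated, i.e.\ that both $h(x) P(x)$ and $f^*(h(x)) Q(x)$ are integrable so that the splitting above is valid and no $\infty - \infty$ ambiguity arises. This is precisely why the statement restricts $\setH$ to a suitable class of (typically bounded and measurable) functions, which guarantees $f^* \circ h$ is $Q$-integrable. A secondary, minor issue is the handling of points with $Q(x) = 0$, resolved by the standard convention $0 \cdot f(P/0) = \lim_{q\downarrow 0} q \, f(P/q)$ together with restricting $h(x)$ to $\dom f^*$; with these conventions the pointwise inequality and its integration remain consistent, so no further argument is needed.
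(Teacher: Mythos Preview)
The paper does not give its own proof of this lemma; it is stated as a cited result from \cite{nowozin2016f, liese2006statistical, liese2006divergences} and then used without further justification. Your argument via the pointwise Fenchel--Young inequality $f(u) \geq tu - f^*(t)$, specialization to $u = P(x)/Q(x)$ and $t = h(x)$, integration against $Q$, and supremum over $h$ is correct and is precisely the standard proof appearing in those references, so there is nothing to contrast.
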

It is worth mentioning that given the generic nature of the function class $\setH$, the bound can be set arbitrarily tight. 

\begin{example}[Bounding KL divergence] 
For $f(u) = u \log u$, the Fenchel conjugate is $f^*(t) = \exp \lbrace t-1 \rbrace$. Using Lemma~\ref{lem:Bound}, we can bound the KL divergence as
\begin{align}
D_{\mathrm{KL}}(P \| Q) \geq
    \Ex{h(x)}{x \sim P}
 - \Ex{\exp \lbrace h(x)-1 \rbrace}{x \sim Q},
\end{align}
for any $h$. The bound is maximized by setting $h$ to
$
    h^\star (x) = 1+ \log {P(x)}/{Q(x)},
$
and returns the KL divergence at its maximum, i.e., 
it satisfies the inequality with identity for $h^\star$. By limiting $h$ to a restricted class, e.g., linear functions, the bound is loosened. 
\end{example}

\section{Universal Upper-Bound on Bayes Error}
In this section, we give a bound on the Bayes error, which unlike the explicit expression, can be computed by sampling.
\subsection{Relating Bayes Error to $f$-Divergence via Hinge Loss}
The Bayes error can be expressed in terms of the $f$-divergence through the so-called \textit{hinge loss}.
\begin{definition}[Hinge loss]
For any real $u$, the hinge loss function $f_{\hng}$ is defined as
$
f_{\hng}(u) = \max \lbrace 0, 1 - u \rbrace .
$
\end{definition}
\paragraph*{Initial Observation}
To understand the relation between the Bayes error and the $f$-divergence, let us consider the example of binary classification with prior probabilities $p_1 = P(C_1)$ and $p_2 = P(C_2) = 1 - p_1$. In this case, we have \cite{noshad2019learning}
\begin{equation}
\label{Binary}
\varepsilon_{\bys} = p_2 - p_2 
\int 
\max\left\{0, 1 - \frac{P_{C_1}(x)p_1}{P_{C_2}(x)p_2}\right\}
P_{C_2}(x)dx.
\end{equation}
Noting that the hinge loss is convex, we next compute the $f$-divergence between $P_{C_1}$ and $P_{C_2}$ in the uniform binary case, i.e., $p_1 = p_2 = 0.5$, for $f = f_{\text{hng}}$. The $f$-divergence in this case reads
\begin{equation}
D_{f_{\text{hng}}}(P_{C_1} \| P_{C_2}) = \int \max \left\{ 0, 1 - \frac{P_{C_1}(x)}{P_{C_2}(x)} \right\} P_{C_2}(x) \, dx. 
\end{equation}
Comparing $D_{f_{\text{hng}}}(P_{C_1} \parallel P_{C_2})$ with the binary Bayes error in \eqref{Binary}, we can conclude that in the uniform binary case
\begin{equation}
\varepsilon_{\text{bys}} = 
\frac{1}{2} - \frac{1}{2}D_{f_{\text{hng}}}(P_{C_1} \parallel P_{C_2}).
\label{relation1}
\end{equation}
Starting from \eqref{relation1}, we can use the lower-bound on $f$-divergence to find an upper-bound on the Bayes error. It is worth mentioning that though this relation is shown for the uniform case, it is straightforward to extend it to more general classification problems.


\subsection{Universal Upper-Bound on Bayes Error} 
The following bound is derived for the Bayes error of the binary classification task. 

\begin{theorem}[Binary Bayes Error]\label{thm:1}
In the binary classification task with uniform prior, the Bayes error rate is bounded as
\begin{equation}
    \varepsilon_{\bys} \leq \frac{1}{2} -  \frac{1}{2}\sup_{h \in \mathcal{H}} \left[ \mathbb{E}_{X \sim P_{C_1}}[h(X)] - \mathbb{E}_{X \sim P_{C_2}}[h(X)] \right]
\end{equation}
where $\mathcal{H}$ is the set of measurable functions mapping $\mathcal{X}$ to the interval $(-1, 0]$, i.e., $ h: \mathcal{X} \mapsto (-1, 0]$.
\end{theorem}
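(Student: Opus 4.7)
The plan is to combine the exact identity \eqref{relation1} with the variational lower bound of Lemma~\ref{lem:Bound}. Since \eqref{relation1} gives $\varepsilon_{\bys} = \frac{1}{2} - \frac{1}{2} D_{f_{\hng}}(P_{C_1} \| P_{C_2})$ in the uniform binary case, any lower bound on the $f$-divergence immediately yields an upper bound on $\varepsilon_{\bys}$ by the sign flip. The work is therefore concentrated in instantiating Lemma~\ref{lem:Bound} for $f = f_{\hng}$ and simplifying its right-hand side into the claimed linear functional.

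The first step is to compute the Fenchel conjugate $f^*_{\hng}(t) = \sup_{u \in \mathbb{R}} \left[ tu - \max\{0, 1-u\} \right]$. I would split the supremum along $u \geq 1$, where the objective is $tu$, and $u < 1$, where it is $(t+1)u - 1$. The first piece is finite exactly when $t \leq 0$ and attains the value $t$ at $u = 1$; the second is finite exactly when $t \geq -1$ and approaches $t$ as $u \uparrow 1$. Hence $f^*_{\hng}(t) = t$ on $t \in [-1, 0]$ and $f^*_{\hng}(t) = +\infty$ otherwise.

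The second step is to apply Lemma~\ref{lem:Bound} with $P = P_{C_1}$, $Q = P_{C_2}$, and $f = f_{\hng}$, restricted to the class $\setH$ of measurable $h : \setX \to (-1, 0]$. Since every such $h(x)$ lies in the finite-valued part of $\dom f^*_{\hng}$, one has $f^*_{\hng}(h(x)) = h(x)$ pointwise, and Lemma~\ref{lem:Bound} collapses to $D_{f_{\hng}}(P_{C_1} \| P_{C_2}) \geq \sup_{h \in \setH} \big[ \Ex{h(x)}{x \sim P_{C_1}} - \Ex{h(x)}{x \sim P_{C_2}} \big]$. Substituting this lower bound into \eqref{relation1} delivers the theorem in a single line.

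The main obstacle is the bookkeeping around the effective domain of $f^*_{\hng}$. Any $h$ whose range escapes $[-1, 0]$ on a set of positive $P_{C_2}$-measure sends $\Ex{f^*_{\hng}(h(x))}{x \sim P_{C_2}}$ to $+\infty$ and is thus irrelevant to the supremum in Lemma~\ref{lem:Bound}; restricting $\setH$ to functions valued in $(-1, 0]$ is therefore consistent with the lemma, and since shrinking the function class can only decrease the supremum, the resulting inequality remains a valid upper bound on $\varepsilon_{\bys}$. Once the conjugate has been identified correctly and this restriction has been justified, the rest reduces to algebra.
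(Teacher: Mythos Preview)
Your proposal is correct and follows exactly the route taken in the paper: identity \eqref{relation1}, then Lemma~\ref{lem:Bound} with $f=f_{\hng}$, computation of $f^*_{\hng}$, and restriction to $\setH=\{h:\setX\to(-1,0]\}$ so that $f^*_{\hng}(h(x))=h(x)$. Your derivation of the Fenchel conjugate is in fact slightly more careful than the paper's (you correctly obtain the closed domain $[-1,0]$ rather than $(-1,0]$), but this has no effect on the final bound since $\setH$ is already restricted to $(-1,0]$.
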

\begin{proof}
Starting from \eqref{relation1}, we use Lemma~\ref{lem:Bound} to bound the Bayes error from above as $\varepsilon_{\bys} \leq 1/2 -D^\star/2 $ with
\begin{align}
    D^\star = \sup_{h \in \mathcal{H}_0} \left[ \mathbb{E}_{X \sim P_{C_1}}[h(X)] - \mathbb{E}_{X \sim P_{C_2}}[f^*_{\hng}(h(X))] \right],
\end{align}
for $\mathcal{H}_0$ that includes all real-valued functions. We next note that the Fenchel conjugate of hinge loss is 
$f_{\hng}^*(t) = t $ for $t \in(-1,0]$ and $f_{\hng}^*(t) = +\infty$ otherwise. Defining set $\mathcal{H}$ to be the set of functions $h: \mathcal{X} \mapsto (-1, 0]$, we use the fact that $\mathcal{H} \subset \mathcal{H}_0$ to conclude that
\begin{align}
    D^\star \geq \sup_{h \in \mathcal{H}} \left[ \mathbb{E}_{X \sim P_{C_1}}[h(X)] - \mathbb{E}_{X \sim P_{C_2}}[f^*_{\hng}(h(X))] \right].
\end{align}
The proof is completed by replacing the above bound into the initial bound and noting that for any $h\in \mathcal{H}$, we have $f^*_{\hng}(h(X))$.
\end{proof}
\vspace{-2mm}
\paragraph*{Multi-class Classification}
Theorem~\ref{thm:1} is extended to a general $m$-class classification task by recursion. Similar to the binary case, the bound 
can  be efficiently computed by sampling.
\vspace{-1mm}
\begin{theorem}[Multi-Class Bayes error]\label{thm:2}
Let $h_0(x) = -1$ for $x\in\setX$. For any sequence of functions $h_1, \ldots, h_{m-1}$, such that $h_i: \setX \mapsto \left(-1, 0\right]$ for $1 \leq i < m$, the Bayes error rate of $m$-class classification, with uniform prior ,\footnote{The illustrated approach can be readily extended to the non-uniform case. This will be discussed in the extended version of this work.}
bounded from above as
 \begin{align}
    \varepsilon_{\bys}
\leq 1 - \frac{1}{m}
\sum_{\lambda=1}^m
\mathcal{E}_\lambda ,
\end{align}   
where $\mathcal{E}_\lambda$ for $1\leq \lambda \leq m$ is defined as
\begin{align}\label{eq:E}
\mathcal{E}_\lambda = 
 \Ex{\sum_{i=\lambda}^{m-1} h_i(x)
 - h_{\lambda-1}(x)}{x \sim P_{C_\lambda} } .
\end{align}
\end{theorem}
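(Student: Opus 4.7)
The plan is to apply the Fenchel conjugate trick of Theorem~\ref{thm:1} to each summand of the decomposition \eqref{Decompose} and reassemble the pieces. With uniform prior the decomposition reduces to $\varepsilon_{\bys} = 1 - 1/m - \sum_{\lambda=2}^{m}\int [\mu_\lambda - \mu_{\lambda-1}]\dif x$, and since $\mu_\lambda = \max\{\mu_{\lambda-1}, P_{C_\lambda}/m\}$, the integrand takes the hinge form $\mu_\lambda - \mu_{\lambda-1} = \max\{0,\, P_{C_\lambda}/m - \mu_{\lambda-1}\}$. This is precisely the same hinge structure that appeared in the binary case \eqref{Binary}, with $\mu_{\lambda-1}$ now replacing $P_{C_1}/2$.

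For each $\lambda\geq 2$ I would then invoke the pointwise Fenchel--Young inequality $\max\{0, A-B\} \geq hB - hA$, valid for any $h\in(-1,0]$, applied with $A = P_{C_\lambda}(x)/m$, $B = \mu_{\lambda-1}(x)$, and some measurable $h_{\lambda-1}:\setX\to(-1,0]$. Integrating yields $\int [\mu_\lambda - \mu_{\lambda-1}]\dif x \geq \int h_{\lambda-1}\mu_{\lambda-1}\dif x - (1/m)\,\Ex{h_{\lambda-1}(x)}{x\sim P_{C_\lambda}}$. Because $\mu_{\lambda-1}$ is not itself a density one can sample from, I would eliminate it via the crude pointwise bound $\mu_{\lambda-1}(x) \leq \sum_{j=1}^{\lambda-1} P_{C_j}(x)/m$; the non-positivity of $h_{\lambda-1}$ flips the inequality upon multiplication, giving $\int h_{\lambda-1}\mu_{\lambda-1}\dif x \geq (1/m)\sum_{j=1}^{\lambda-1}\Ex{h_{\lambda-1}(x)}{x\sim P_{C_j}}$.

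Finally, I would sum the resulting bounds over $\lambda = 2, \ldots, m$, substitute into the decomposition, and reindex by $i = \lambda - 1$. Collecting the contributions of each $h_i$ across the different $\lambda$, and using the convention $h_0 \equiv -1$---so that $-\Ex{h_0(x)}{x\sim P_{C_1}} = 1$ absorbs the free $1/m$ stemming from the $P(C_1)$ term in \eqref{Decompose}---direct bookkeeping shows that the total equals $(1/m)\sum_{\lambda=1}^{m}\mathcal{E}_\lambda$ with $\mathcal{E}_\lambda$ as in \eqref{eq:E}. The main obstacle is the choice of bound for $\mu_{\lambda-1}$: replacing the max by the \emph{sum} of class likelihoods (rather than by a single $P_{C_j}$, or keeping the max intact) is what forces the contributions of each $h_i$ to aggregate over exactly the classes $C_1,\ldots,C_i$, producing the $\sum_{i=\lambda}^{m-1} h_i$ pattern of $\mathcal{E}_\lambda$ after the reindexing.
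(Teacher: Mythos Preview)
Your proposal is correct and follows precisely the route the paper sketches (``treating it as a sequence of binary tasks''): you invoke the decomposition~\eqref{Decompose}, recognise each increment $\mu_\lambda-\mu_{\lambda-1}$ as a hinge term, apply the Fenchel--Young inequality of Theorem~\ref{thm:1} to each, and then reassemble. In fact you supply the details the paper defers to its extended version; in particular, your use of $\mu_{\lambda-1}\le \tfrac{1}{m}\sum_{j\le\lambda-1}P_{C_j}$ combined with $h_{\lambda-1}\le 0$ is exactly the step needed to turn the non-samplable $\mu_{\lambda-1}$ into expectations under the class-conditional densities, and your bookkeeping (swapping the order of summation and absorbing the $P(C_1)=1/m$ term via $h_0\equiv -1$) checks out.
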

\begin{proof}
The bound in the binary case is extended to $m$-class classification by treating it as a sequence of binary tasks. Details are presented in the extended version.
\end{proof}
\vspace{-1mm}
\section{Bayes Optimal Learning Threshold Loss}\label{sec:BOLTLoss}
We now illustrate how the proposed bound can be used to derive an alternative loss for training a parameterized model. 
To this end, let us consider a supervised learning setting: let $\setD$ be a dataset with $n$ labeled data-points, i.e., $ \setD = \set{(x_i, \lambda_i): i\in [n] } $, where \( x_i \in \setX \subseteq \mathbb{R}^d \) represents a sample feature, and \( \lambda_i \in [m] \) denotes its label. The objective is to train a neural network that maps any feature \( x \) from the data space $\setX$ into one of the \( m \) possible classes.
\vspace{-3mm}
\paragraph*{Model}
Let the model \( F_\theta(x) \) be parameterized with $\theta\in\setR^k$, e.g., a neural network with $k$ weights. As the model performs classification, its output for a given sample $x$ computes an $m$-dimensional categorical distribution  determining the probability of $x$ belonging to each class, e.g., output of a softmax. We hence have  
$
F_\theta(x) =
[
h_{\theta,1}(x), \cdots, h_{\theta,m}(x)
]^{\sf T},
$
where \( h_{\theta,i}(x) \) is the (potentially normalized) probability computed for class \( i \). Note that in this case, one of the outputs 
can be computed in terms of the others, e.g., 
\begin{equation}
\label{eq:h_m}
h_{\theta,m}(x) = 1 - \sum_{i=1}^{m-1} h_{\theta,i}(x).
\end{equation}
The proposed Bayes error bound can be interpreted as a loss function whose minimization can drive the classification error rate toward the Bayes error rate, i.e., the minimal error in the Bayesian framework. To illustrate this interpretation, let us define the following loss function: for a given pair $(x,\lambda)$, the loss between the model output $F_\theta(x)$ and its true label $\lambda$ is computed by
\[
\ell_\text{BOLT} (F_\theta(x), \lambda) = 1 - \mathcal{E}_\lambda(F_\theta(x)),
\]
where $\mathcal{E}_\lambda(F_\theta(x))$ is\footnote{Note that this term depends implicitly on $h_m(x)$ through \eqref{eq:h_m}.}
\[
\mathcal{E}_\lambda(F_\theta(x)) = \sum_{i=\lambda}^{m-1}  h_{\theta,i}(x) - h_{\theta,\lambda-1}(x) .
\]
The expected loss is then computed as
\[
\mathcal{L}_\theta = \mathbb{E}_{x,\lambda} \left[\ell_\text{BOLT}(F_\theta(x), \lambda)\right].
\]
From Theorem~\ref{thm:2}, we know that $\varepsilon_\bys \leq \min_\theta \mathcal{L}_\theta$. The right-hand side can closely approach to the Bayes error rate, which leads us to propose $\ell_\text{BOLT}$ as an alternative loss function to train the neural network. We call this loss function \textit{BOLT} , which stands for Bayesian optimal learning threshold. To train via stochastic gradient descent (SGD), we compute the gradient estimator in each iteration by averaging 
\(
\nabla_\theta \ell_\text{BOLT}(F_\theta(x), \lambda)
\) 
over a mini-batch randomly sampled from $\setD$. This procedure allows the model to minimize the BOLT loss, converging toward optimal generalization.
\vspace{-1mm}
\section{Numerical Experiments}

We next validate the effectiveness of the BOLT loss by considering two sets of experiments: ($i$) an illustrative toy-example, and ($ii$) Experiments with Realistic Datasets.

\subsection{Illustrative Example}
A binary classification problem is considered which contains samples of two \textit{unit-variance} Gaussian distributions with means $\mu_1$ and $\mu_2$. This means that the conditional distribution $P_{C_\lambda}$ for $\lambda\in\set{1,2}$ is specified by $\mathcal{N} (\mu_\lambda,1)$. Our interest in this example follows from the fact that the Bayes error rate in this case is readily computed as
$
\varepsilon_{\bys} = \mathrm{Q} \left( \frac{\mu_1 - \mu_2}{2} \right)
$
with \( \mathrm{Q}(x) \) being the standard $\mathrm{Q}$-function. It is straightforward to show that this rate is achieved by the bound in Theorem~\ref{thm:1}, when we set $h(x)$ to be the binary classifier whose threshold is at $0.5(\mu_1+\mu_2)$.
We train a model with two fully-connected hidden layers. The first and second hidden layers contain $100$ and $50$ neurons, respectively. All hidden neurons are activated by the exponential linear unit (ELU). At the output layer, a single sigmoid-activated neuron is adjusted to return values in the range \((-1, 0]\) and is treated as the function \(h(x)\) for binary BOLT loss. 
We train and test the model for different choices of $\mu_1$ and $\mu_2$: we change the difference between the means from $-6$ to $6$. For each choice, we synthesize a training dataset with 20K samples containing 10K samples of each distribution. The model is trained for 1000 iterations to minimize the BOLT loss. Each iteration uses a randomly-sampled mini-batch of size 100. We then test the model using a synthesized set of size 1000 samples. 
The results are shown in Fig.~\ref{fig:bayes_comparison}, where we plot classification error against the mean difference, i.e., \(\mu_1 - \mu_2\). For comparison, the Bayes error rate is further plotted. As Fig~\ref{fig:bayes_comparison} shows, the two curves are nearly indistinguishable, confirming that our trained model achieves the minimal classification error in the Bayesian sense. This aligns with the universal approximation theorem \cite{hornik1991approximation}.
\begin{figure}[t!]
    \centering
        \begin{tikzpicture}
        \begin{axis}[
            width=2.6in, height=1.9in,
            xlabel={$\mu_1 - \mu_2$},
            ylabel={ Classification Error},
            legend pos=north east,
      ]
        \addplot[color=red, mark=*, thick] coordinates {
            (-6.0, 0.0013499) (-5.3333, 0.00383038) (-4.6667, 0.00981533) 
            (-4.0, 0.02275013) (-3.3333, 0.04779035) (-2.6667, 0.09121122)
            (-2.0, 0.15865525) (-1.3333, 0.25249254) (-0.6667, 0.36944134)
            (0.0, 0.5) (0.6667, 0.36944134) (1.3333, 0.25249254)
            (2.0, 0.15865525) (2.6667, 0.09121122) (3.3333, 0.04779035)
            (4.0, 0.02275013) (4.6667, 0.00981533) (5.3333, 0.00383038)
            (6.0, 0.0013499)
        };
        \addlegendentry{$\varepsilon_\bys$}
        \addplot[color=green, mark=square*, thick] coordinates {
            (-6.0,  0.002924799919128418) (-5.3333, 0.0033133119344711305) (-4.6667, 0.006892746686935425) 
            (-4.0, 0.025854086875915526) (-3.3333, 0.04903736710548401) (-2.6667,0.0860111892223358)
            (-2.0, 0.15536127686500557) (-1.3333, 0.2581663012504577 )(-0.6667,0.3700394004583359)
            (0.0, 0.509317138791084) (0.6667, 0.38276363015174864) (1.3333, 0.24733462929725647)
            (2.0, 0.15572800040245055) (2.6667, 0.08786912858486176) (3.3333, 0.050270494818687436)
            (4.0, 0.027295491099357604) (4.6667, 0.011860710382461549) (5.3333, 0.00601939857006073)
            (6.0, 0.0025591999292373657)
        };
        \addlegendentry{BOLT}

        \end{axis}
    \end{tikzpicture}
    \caption{Comparing Bayes error rate $\varepsilon_\bys$ with the classification error achieved by the neural network trained with BOLT loss: the trained model matches almost perfectly with $\varepsilon_\bys$.}
    \label{fig:bayes_comparison}
\end{figure}
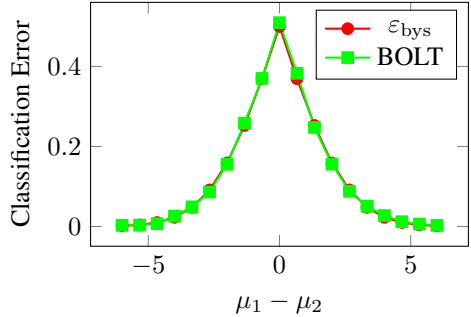
\subsection{Experiments with Realistic Datasets}

We conduct experiments on four different datasets: CIFAR-10 \cite{krizhevsky2009learning}, MNIST \cite{lecun1998gradient}, Fashion-MNIST \cite{xiao2017fashion}, and the IMDb sentiment classification dataset \cite{maas2011learning}. To compute the BOLT loss, we note that the labels are uniformly distributed in both datasets. We hence follow the approach illustrated in Section~\ref{sec:BOLTLoss}. As a benchmark, we compare the results with the case in which the model is trained using the cross-entropy loss function.
\vspace{-1mm}
\subsubsection{Setup and Models}
\vspace{2mm}
\paragraph*{CIFAR-10} We adapt a pretrained ResNet-18 \cite{he2016deep} for 10-class classification on CIFAR-10. The model is trained for $100$ epochs with batch-size $128$ using Adam optimizer. The learning rate is set to $\eta = 0.001$ initially and dynamically adjusted through training. Two separate models are trained in parallel using BOLT and cross-entropy. 

\paragraph*{MNIST and Fashion-MNIST} We use a convolutional neural network (CNN) with two convolutional layers followed by max-pooling and two fully-connected layers, with batch normalization applied to the convolutional layers. The models are trained for $50$ epochs with batch-size $64$, using the Adam optimizer and an initial learning rate of $\eta = 0.001$. The learning rate is scheduled using the ‘Reduce on Plateau’ strategy \cite{goodfellow2016deep, pytorch_reduceonplateau} with a step decay factor of 0.1 after 5 epochs of no improvement in the validation loss and the minimum learning rate of 1e-6.

\vspace{-3mm}
\paragraph*{IMDb} For sentiment classification on the IMDb dataset, we employ a pretrained BERT-base \cite{devlin2019bert}. The model is trained for $10$ epochs with batch-size $32$, using the Adam optimizer and an initial learning rate of $\eta = 10^{-5}.$
The task involves predicting whether a given movie review is positive or negative.
\subsubsection{Performance Comparison}
\vspace{1mm}

\begin{figure}[t!]
  \centering
  \begin{tikzpicture}
    \begin{axis}[
        width=2.6in, height=1.9in,
        xlabel={epoch},
        ylabel={Accuracy in \%},
        ymin=68.5, ymax=94.2,
        xmin=-6, xmax=107,
        xtick={1, 20, 40, 60, 80, 100},
        legend pos=south east,
    ]

    \addplot[blue!60, thick] table[row sep=\\, x index=0, y index=1] {
    epoch accuracy \\
    1 10.88 \\
    2 57.76 \\
    3 65.16 \\
    4 72.31 \\
    5 75.20 \\
    6 77.75 \\
    7 79.09 \\
    8 79.11 \\
    9 80.99 \\
    10 82.25 \\
    11 82.38 \\
    12 82.32 \\
    13 84.69 \\
    14 82.11 \\
    15 85.04 \\
    16 85.80 \\
    17 86.01 \\
    18 86.04 \\
    19 86.30 \\
    20 86.23 \\
    21 86.30 \\
    22 88.15 \\
    23 86.80 \\
    24 86.74 \\
    25 86.65 \\
    26 88.05 \\
    27 86.08 \\
    28 87.55 \\
    29 87.53 \\
    30 88.02 \\
    31 88.54 \\
    32 87.68 \\
    33 87.58 \\
    34 88.78 \\
    35 88.15 \\
    36 88.83 \\
    37 88.89 \\
    38 88.78 \\
    39 88.71 \\
    40 88.25 \\
    41 89.57 \\
    42 89.43 \\
    43 88.46 \\
    44 88.94 \\
    45 89.61 \\
    46 88.83 \\
    47 88.65 \\
    48 88.68 \\
    49 88.84 \\
    50 89.01 \\
    51 88.90 \\
    52 91.33 \\
    53 91.44 \\
    54 91.58 \\
    55 91.56 \\
    56 91.60 \\
    57 91.40 \\
    58 91.63 \\
    59 91.67 \\
    60 91.44 \\
    61 91.59 \\
    62 91.57 \\
    63 91.60 \\
    64 91.82 \\
    65 91.53 \\
    66 91.75 \\
    67 91.54 \\
    68 91.34 \\
    69 91.69 \\
    70 91.84 \\
    71 91.95 \\
    72 91.52 \\
    73 91.61 \\
    74 91.56 \\
    75 91.67 \\
    76 91.75 \\
    77 91.79 \\
    78 91.58 \\
    79 91.70 \\
    80 91.75 \\
    81 91.80 \\
    82 91.62 \\
    83 91.52 \\
    84 91.63 \\
    85 91.54 \\
    86 91.83 \\
    87 91.66 \\
    88 91.44 \\
    89 91.80 \\
    90 91.76 \\
    91 91.72 \\
    92 91.84 \\
    93 91.61 \\
    94 91.70 \\
    95 91.71 \\
    96 91.67 \\
    97 91.88 \\
    98 91.60 \\
    99 91.62 \\
    100 91.91 \\
    101 91.56 \\
};
\addlegendentry{CE}

\addplot[red, thick] table[row sep=\\, x index=0, y index=1] {
    epoch bayes_accuracy \\
    1 10.88 \\
    2 60.80 \\
    3 69.33 \\
    4 69.61 \\
    5 68.99 \\
    6 73.15 \\
    7 73.75 \\
    8 75.02 \\
    9 77.65 \\
    10 77.14 \\
    11 78.19 \\
    12 79.01 \\
    13 79.48 \\
    14 80.67 \\
    15 81.28 \\
    16 81.16 \\
    17 82.34 \\
    18 82.11 \\
    19 82.20 \\
    20 82.32 \\
    21 82.59 \\
    22 82.69 \\
    23 82.63 \\
    24 82.81 \\
    25 85.85 \\
    26 86.23 \\
    27 86.83 \\
    28 86.65 \\
    29 88.42 \\
    30 86.85 \\
    31 87.23 \\
    32 86.17 \\
    33 88.78 \\
    34 88.34 \\
    35 88.24 \\
    36 88.76 \\
    37 88.99 \\
    38 88.94 \\
    39 88.27 \\
    40 89.58 \\
    41 88.75 \\
    42 88.67 \\
    43 88.96 \\
    44 88.62 \\
    45 89.69 \\
    46 91.59 \\
    47 91.86 \\
    48 91.75 \\
    49 92.16 \\
    50 91.27 \\
    51 91.77 \\
    52 92.01 \\
    53 91.81 \\
    54 92.21 \\
    55 91.71 \\
    56 92.03 \\
    57 91.28 \\
    58 91.75 \\
    59 92.40 \\
    60 92.54 \\
    61 92.32 \\
    62 92.25 \\
    63 91.33 \\
    64 92.28 \\
    65 92.73 \\
    66 92.49 \\
    67 92.71 \\
    68 92.65 \\
    69 92.87 \\
    70 92.48 \\
    71 92.55 \\
    72 92.64 \\
    73 92.43 \\
    74 92.68 \\
    75 92.99 \\
    76 92.77 \\
    77 92.81 \\
    78 92.98 \\
    79 92.71 \\
    80 93.05 \\
    81 92.87 \\
    82 92.97 \\
    83 92.98 \\
    84 93.00 \\
    85 93.05 \\
    86 93.11 \\
    87 93.11 \\
    88 93.22 \\
    89 93.18 \\
    90 93.15 \\
    91 93.29 \\
    92 93.29 \
    93 93.19 \\
    94 93.23 \\
    95 93.21 \\
    96 93.16 \\
    97 93.27 \\
    98 93.19 \\
    99 93.29 \\
    100 93.25 \\
    101 93.20 \\
};
\addlegendentry{BOLT}
    
    \end{axis}
    \end{tikzpicture}
  \caption{Test accuracy of ResNet-18 trained on CIFAR-10 using cross-entropy (CE) and BOLT loss: the test accuracy improves by 1.34\% at the final epoch with the BOLT loss.}
  \label{Fig:accuracy_CIFAR10}
\end{figure}
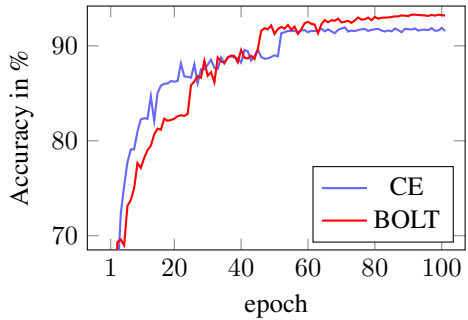
The test accuracy of ResNet-18 on CIFAR-10 is plotted against the number of epochs in Fig.~\ref{Fig:accuracy_CIFAR10}. As the figure shows, the model trained using BOLT  surpasses the accuracy of cross-entropy-trained model after 40 epochs. The final test accuracy for all datasets are summarized in Table \ref{tab:performance}. As shown in the table, the BOLT loss consistently outperforms the cross-entropy loss across all datasets, particularly on CIFAR-10 and IMDb, which are more complex than MNIST.
\vspace{-3mm}
\begin{table}[ht]
  \centering
  \caption{Test accuracy achieved by BOLT and cross-entropy.}\vspace{1mm}
  \begin{tabular}{l c c }
    \hline
    \textbf{Dataset}$\qquad$ & $\qquad$\textbf{BOLT}$\qquad$ & \;\textbf{Cross-Entropy}\; \\
    \hline
    \hline
    \textbf{CIFAR-10} & 93.29\% & 91.95\% \\
    \textbf{IMDb}  & 94.56\% & 93.51\% \\
    \textbf{Fashion-MNIST} & 91.79\% & 91.39\% \\
    \textbf{MNIST} & 99.29\% & 99.29\% \\
    \hline
  \end{tabular}
  \label{tab:performance}
\end{table}

\vspace{-4mm}
\section{Conclusion}
A novel upper-bound on the Bayes error rate has been developed using the $f$-divergence. The bound has been interpreted as a loss function whose minimization can achieve the minimum expected classification error. This loss, referred to as BOLT, offers a statistically-grounded alternative to traditional losses, such as cross-entropy, aligning the training with achieving the Bayes error. Numerical experiments demonstrated that BOLT can improve the learning performance for rather complex datasets, such as CIFAR-10. This suggests that training with BOLT can lead to better generalization in learning problems with more challenging datasets. The scope of this work can be readily extended beyond classification. Work in this direction is currently ongoing.

\newpage
\bibliographystyle{plainnat} 
\bibliography{references}

\end{document}